\def\BibTeX{{\rm B\kern-.05em{\sc i\kern-.025em b}\kern-.08em
    T\kern-.1667em\lower.7ex\hbox{E}\kern-.125emX}}
\newtheorem{myTheo}{Theorem}
\newtheorem{myProp}{Proposition}
\begin{document}

\setcounter{myProp}{3}

\title{Appendix}
\maketitle

\section{Proof of Theorem 1}
This section will provide proof of Theorem 1, presented in the main paper. We need the following two propositions to prove Theorem 1, whose numbers follow the proposition number of the main paper.

\begin{myProp}
\label{Prop: Gaussian mechenism}
    (Gaussian Mechanism) Let $f: D \rightarrow R$ be an arbitrary function with sensitivity being     
    
    \centerline{$\Delta_2 f=\max _{D, D^{\prime}}\left\|f(D)-f\left(D^{\prime}\right)\right\|_2$}     
    \noindent for any adjacent $D, D^\prime \in \mathcal{D}$. The Gaussian Mechanism $M_\sigma$,     
    
    \centerline{$\mathcal{M}_\sigma(\boldsymbol{x})=f(\boldsymbol x)+\mathcal{N}\left(0, \sigma^2 I\right)$}     
    \noindent provides $\left(\alpha, \alpha \Delta_2 f^2 /2\sigma^2\right)$-RDP.
\end{myProp}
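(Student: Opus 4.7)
The plan is to bound the Rényi divergence $D_\alpha(\mathcal{M}_\sigma(D) \,\|\, \mathcal{M}_\sigma(D'))$ uniformly over adjacent pairs $D, D' \in \mathcal{D}$. Fix such a pair and let $\mu = f(D) - f(D')$, so $\|\mu\|_2 \leq \Delta_2 f$ by the sensitivity hypothesis. The two output laws are Gaussians $\mathcal{N}(f(D), \sigma^2 I)$ and $\mathcal{N}(f(D'), \sigma^2 I)$, so by translation invariance of Rényi divergence between equal-covariance Gaussians I would change variables $y = x - f(D')$ to reduce to comparing $\mathcal{N}(\mu, \sigma^2 I)$ against $\mathcal{N}(0, \sigma^2 I)$.

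The central step is a standard Gaussian integral. Writing $p, q$ for the two shifted densities, I would form the log-ratio $\alpha \log(p(y)/q(y))$; the $\|y\|_2^2$ terms cancel, leaving the affine expression $\frac{\alpha}{\sigma^2}\langle y, \mu\rangle - \frac{\alpha}{2\sigma^2}\|\mu\|_2^2$. Exponentiating and integrating against $\mathcal{N}(0, \sigma^2 I)$ reduces to evaluating the moment generating function of an isotropic Gaussian at $\alpha \mu / \sigma^2$, which by completion of the square gives
\[
\mathbb{E}_{y \sim \mathcal{N}(0, \sigma^2 I)}\!\left[\left(\frac{p(y)}{q(y)}\right)^\alpha\right] = \exp\!\left(\frac{\alpha(\alpha-1)}{2\sigma^2}\|\mu\|_2^2\right).
\]

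Taking $\frac{1}{\alpha-1}\log$ then yields $D_\alpha = \frac{\alpha}{2\sigma^2}\|\mu\|_2^2 \leq \frac{\alpha (\Delta_2 f)^2}{2\sigma^2}$, and the supremum over adjacent pairs delivers the stated $(\alpha, \alpha \Delta_2 f^2/2\sigma^2)$-RDP guarantee. The only real obstacle is executing the integral cleanly; however, rotational invariance of the isotropic noise further reduces it to a one-dimensional Gaussian with mean $\|\mu\|_2$, so the completion-of-the-square step becomes essentially scalar arithmetic and the multivariate case follows immediately.
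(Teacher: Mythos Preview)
Your argument is correct: the closed-form R\'enyi divergence between two isotropic Gaussians with common covariance $\sigma^2 I$ is exactly $\alpha\|\mu\|_2^2/(2\sigma^2)$, and your completion-of-the-square computation recovers it cleanly, after which the sensitivity bound $\|\mu\|_2\le \Delta_2 f$ gives the claim. The paper itself does not supply an argument for this proposition at all---it simply cites \cite{mironov2017renyi}---so your proposal is not so much a different route as the actual content behind that citation; the computation you outline is precisely the one carried out in Mironov's paper.
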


\begin{proof}
Proof can be found in \cite{mironov2017renyi}.
\end{proof}

\begin{myProp}
\label{theorem: post-processing}
(Post-processing). If $f(\cdot)$ satisfies ($\epsilon,\delta$)-DP, $g(f(\cdot))$ will satisfy ($\epsilon,\delta$)-DP for any function $g(\cdot)$. Similarly, if $f(\cdot)$ satisfies ($\alpha,\epsilon$)-RDP, $g(f(\cdot))$ will satisfy ($\alpha,\epsilon$)-RDP for any function $g(\cdot)$.
\end{myProp}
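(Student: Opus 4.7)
The plan is to prove both implications by a reduction: show that any event (or any R\'enyi-divergence computation) involving $g(f(\cdot))$ can be bounded by a corresponding quantity involving only $f(\cdot)$, and then invoke the hypothesis on $f$ directly. I expect the deterministic-$g$ case to be essentially tautological; the only real maneuver is handling randomized $g$.

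For the DP claim I would first treat the case in which $g$ is deterministic. Fix adjacent databases $D,D'$ and any measurable event $T$ in the range of $g$; because $\{g(f(D))\in T\}=\{f(D)\in g^{-1}(T)\}$, applying the $(\epsilon,\delta)$-DP guarantee of $f$ to the pre-image $S=g^{-1}(T)$ gives
\[\Pr[g(f(D))\in T]\le e^{\epsilon}\Pr[g(f(D'))\in T]+\delta\]
immediately. To lift this to randomized $g$, I would represent $g$ as $g(y)=\tilde g(y,r)$ for a deterministic $\tilde g$ and an independent random seed $r$; the bound then holds for every fixed $r$ by the deterministic case, and taking expectation over $r$ (linearity of expectation, or Fubini) preserves the inequality and yields the $(\epsilon,\delta)$-DP guarantee for $g\circ f$.

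For the RDP claim, the cleanest route is to invoke the data-processing inequality for the R\'enyi divergence: for any $\alpha\ge 1$ and any (possibly randomized) map $g$, $D_\alpha(g(X)\,\|\,g(Y))\le D_\alpha(X\,\|\,Y)$. Instantiating $X=f(D)$ and $Y=f(D')$ and using the RDP hypothesis $D_\alpha(f(D)\,\|\,f(D'))\le\epsilon$ gives $D_\alpha(g(f(D))\,\|\,g(f(D')))\le\epsilon$, which is exactly the $(\alpha,\epsilon)$-RDP statement for $g\circ f$. Alternatively, if one prefers to avoid citing the data-processing inequality as a black box, the same seeded representation $g(y)=\tilde g(y,r)$ together with joint convexity of the $\alpha$-R\'enyi divergence in its arguments reproduces the bound.

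The main technical wrinkle in both parts is accommodating randomized post-processors: for DP this requires introducing an auxiliary seed and integrating; for RDP it requires the Markov-kernel version of the data-processing inequality rather than just the deterministic version. Both moves are standard, so beyond these the proposition contains no real content and no further calculation is required.
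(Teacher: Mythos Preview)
Your argument is correct and is the standard one. The paper itself does not give an in-text proof of this proposition; it simply cites \cite{dwork2014algorithmic}. Your sketch for the $(\epsilon,\delta)$-DP half (pre-image for deterministic $g$, then average over an independent seed for randomized $g$) is exactly the proof appearing in that reference, and the data-processing inequality for R\'enyi divergence is how the RDP half is handled in Mironov's original RDP paper. So there is nothing substantive to compare: you have supplied the argument that the paper outsourced to a citation.
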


\begin{proof}
Proof can be found in \cite{dwork2014algorithmic}.
\end{proof}

Now, we repeat the Gaussian mechanism described in the main paper. When updating the parameters of the first linear layers of the local attribute discriminators and feature extractors, we clip and add Gaussian noise to the gradients as follows,
\begin{align}
\label{eq:clip}
    &clip(\mathcal{G}_{M}^1,C) = \mathcal{G}_{M}^1 / \max \left(1,\left\|\mathcal{G}_{M}^1\right\|_2 / C\right), \\
\label{eq:add noise}
    &\mathcal{G}_{M}^1 = clip(\mathcal{G}_{M}^1,C) + \mathcal{N}\left(0, \sigma^2 (2C)^2 I\right),
\end{align}
where the subscripte $M$ denote any local attribute discriminator and feature extractor, $\mathcal{G}_{M}^1$ denotes the gradients of the first layer parameters of the $M$, and $C$ denotes the clipping bound.

\begin{myTheo} \label{theorem RDP guarantee}
    (RDP Guarantee) All the local attribute discriminators and feature extractors satisfy ($\alpha$, $\alpha/(2\sigma^2)$)-RDP and the local attribute generators satisfy ($\alpha$, $\alpha/\sigma^2$)-RDP in one training iteration of DPVFLGAN-TS.
\end{myTheo}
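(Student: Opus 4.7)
The plan is to chain together the Gaussian mechanism of Proposition~4, the RDP post-processing property of Proposition~5, and the standard additive composition of RDP.

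First, for each local attribute discriminator or feature extractor $M$, I would analyze the $\ell_2$ sensitivity of the first-layer gradient $\mathcal{G}_M^1$ viewed as a function of the local training set. The clipping step in \eqref{eq:clip} guarantees that every per-sample contribution to $\mathcal{G}_M^1$ has $\ell_2$ norm at most $C$. Under the replace-one adjacency that the paper implicitly uses, swapping a single record can change the aggregated clipped first-layer gradient by at most $\|g_i\|_2 + \|g_j\|_2 \le 2C$, so $\Delta_2\mathcal{G}_M^1 \le 2C$. Equation \eqref{eq:add noise} then adds Gaussian noise with covariance $\sigma^2(2C)^2 I$, i.e.\ standard deviation $2C\sigma$. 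Invoking Proposition~\ref{Prop: Gaussian mechenism} with $\Delta_2 f = 2C$ and noise scale $2C\sigma$ yields RDP parameter $\alpha(2C)^2/(2(2C\sigma)^2)=\alpha/(2\sigma^2)$, so the noisy first-layer gradient is $(\alpha,\alpha/(2\sigma^2))$-RDP.

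Next, all remaining updates inside $M$ (deeper-layer parameter updates, activations, etc.) are produced by backpropagation through the already-noised first-layer output, with no further access to the raw local data. By Proposition~\ref{theorem: post-processing}, the entire post-update parameter vector of $M$ therefore inherits the $(\alpha,\alpha/(2\sigma^2))$-RDP guarantee, establishing the first half of the theorem.

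For the generator claim, the key observation is that each local attribute generator's update in a single iteration is assembled from gradients backpropagated through two independently privatized modules, namely the corresponding local discriminator and the local feature extractor, each of which has already been released with independent Gaussian noise at level $\alpha/(2\sigma^2)$. Additive composition of RDP for independent mechanisms then gives a joint privacy parameter of $\alpha/(2\sigma^2)+\alpha/(2\sigma^2)=\alpha/\sigma^2$, and a second application of Proposition~\ref{theorem: post-processing} carries this guarantee through the deterministic generator update to give $(\alpha,\alpha/\sigma^2)$-RDP.

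The main obstacle I foresee is making the sensitivity argument fully rigorous: I need to verify that the first-layer gradient genuinely decomposes per-sample so that clipping enforces $\Delta_2\mathcal{G}_M^1\le 2C$ under the paper's adjacency notion, and that the two noisy mechanisms feeding into the generator are in fact independent, so that the additive RDP composition (rather than a tighter joint analysis or a loser union-bound style argument) is exactly what applies.
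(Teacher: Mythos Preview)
Your proposal is correct and follows essentially the same route as the paper: bound the $\ell_2$ sensitivity of the clipped first-layer gradient by $2C$, apply Proposition~\ref{Prop: Gaussian mechenism} with noise scale $2C\sigma$ to get $(\alpha,\alpha/(2\sigma^2))$-RDP, invoke Proposition~\ref{theorem: post-processing} to carry this through to the full module $M$, and then use additive RDP composition (this is the paper's Proposition~1 from the main text) on the discriminator and feature-extractor releases feeding the generator to obtain $(\alpha,\alpha/\sigma^2)$-RDP, followed by post-processing once more.

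One small discrepancy worth flagging, since it bears directly on the obstacle you raise at the end: the clipping in \eqref{eq:clip} is applied to the \emph{aggregate} mini-batch gradient $\mathcal{G}_M^1$, not per-sample. The paper's sensitivity argument is therefore simply that each clipped aggregate has norm at most $C$, so for two adjacent mini-batches the triangle inequality gives $\Delta_2 f \le 2C$ directly, with no need for a per-sample decomposition. This dissolves your stated concern about verifying the per-sample structure; the independence of the two Gaussian noises is likewise immediate from \eqref{eq:add noise} being applied separately to $D_{ij}$ and $FE_i$.
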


\begin{proof}

Let $X_i \in \mathbb{R}^{N \times |A_i| \times T}$ denote the local data of party $i$ and $\boldsymbol{x}_i^B$ and $\boldsymbol{x}_i^{B^\prime} \in \mathbb{R}^{B \times |A_i| \times T}$ denote two adjacent mini-batches sampled from $X_i$, where $N$ is dataset size, $B$ is mini-batch size, $A_i$ is attribute size of party $i$, and $T$ is length of the time series. Then, $\boldsymbol{x_{ij}}^B \in \mathbb{R}^{B \times T}$ is the attribute that is input into $D_{ij}$ and $\boldsymbol{x_{i}}^B$ is the input of $FE_i$.

The gradients of the first-layer parameters of $M \in \{D_{i1},\cdots,D_{i|A_i|}, FE_i\}$ are clipped using \eqref{eq:clip}. Then, the L2 norm of those gradients has the following upper bound,
\begin{equation}
\label{eq:clip in proof}
    \left\|clip(\mathcal{G}{_{M(\boldsymbol{x}_j^B)}^1},C)\right\|_2 \leq C.
\end{equation}
According to the triangle inequality, the L2 sensitivity of the parameters can be derived as
\begin{equation}
\label{eq:sensitivity in proof}
    \Delta_2 f=\max _{\boldsymbol{x}_i^B, \boldsymbol{x}_i^{B^\prime}}\left\|clip(\mathcal{G}{_{M(\boldsymbol{x}_i^B)}^1},C)-clip(\mathcal{G}{_{M(\boldsymbol{x}_i^{B^\prime})}^1},C)\right\|_2 \leq 2 C.
\end{equation}
According to Proposition \ref{Prop: Gaussian mechenism}, $\mathcal{G}{_{M}^1}$ computed by \eqref{eq:add noise} satisfies $(\alpha, \alpha/(2\sigma^2))$-RDP. 

According to Proposition \ref{theorem: post-processing}, the parameters of the first layer of $M$ updated by,
\begin{equation}
    \boldsymbol{\theta}_{M}^1 = \boldsymbol{\theta}_{M}^1-\eta_{M}\mathcal{G}_{M}^1,
\end{equation}
satisfy the same RDP as $\mathcal{G}_{M}^1$. 
The function of $M$, $f_M$, can be expressed as,
\begin{equation}
\label{eq:InF-DP}
    {f}_M=func_{M}(\boldsymbol{\theta}_{M}^1 \boldsymbol{x}_{i(j)}), 
\end{equation}
where $\boldsymbol{x}_{i(j)}$ denotes the input of $M$, $func_{M}$ denotes the calculation after the first layer, i.e., $\boldsymbol{\theta}_{M}^1 \boldsymbol{x}_i(j)$. Thus, according to Proposition \ref{theorem: post-processing}, since $\boldsymbol{\theta}^1_{D_i}$ satisfies $(\alpha, \alpha/(2\sigma^2))$-RDP, the mechanism $f_M$ in \eqref{eq:InF-DP} satisfy $(\alpha, \alpha/(2\sigma^2))$-RDP, i.e., all the local attribute discriminators and feature extractors satisfy $(\alpha, \alpha/(2\sigma^2))$-RDP.

On the other hand, the local attribute generator, $G_{ij}$, is trained by the corresponding discriminator $D_{ij}$ and feature extractor $FE_i$. During the back-propagation process, let $\boldsymbol{\delta}^{1}_{FE_i}$ and $\boldsymbol{\delta}^1_{D_i}$ denote the backward gradients after the first layer of $FE_i$ and $D_i$, respectively. Then, the backward gradients for $G_{ij}$, $\boldsymbol{\delta}^{G_{ij}}$, can be calculated by,
\begin{equation}
    \boldsymbol{\delta}^{G_{ij}} = \boldsymbol{\delta}^1_{D_i}\boldsymbol{\theta}_{D_i}^1 + \boldsymbol{\delta}^1_{FE_i}\boldsymbol{\theta}_{FE_i}^1.
\end{equation}
According to Proposition 1 (in the main paper), $\boldsymbol{\delta}^{G_{ij}}$ satisfies $(\alpha, \alpha/\sigma^2)$-RDP.
Since the parameters of $G_{ij}$ is updated according to $\boldsymbol{\delta}^{G_{ij}}$, ${G_{ij}}$ satisfies $(\alpha, \alpha/\sigma^2)$-RDP.
\end{proof}

\section{Threat Model of the Auditing Scheme}

We consider the scenario where all models, including attribute generators, attribute discriminators, feature extractors, and the shared discriminator, are kept private while the generated synthetic dataset is publicly accessible; that is, the attacker can only access the synthetic dataset. 

In \cite{Theresa01}, the authors proposed a shadow model-based membership inference attack that assumes the adversary has an auxiliary dataset with a similar distribution to the private datasets and is much larger than the private dataset. However, this assumption is impractical for data owners since they intend to use most data to train the model rather than audit privacy breaches. Thus, this paper uses the Leave-One-Out (LOO) assumption proposed in \cite{Jiayuan01}, which is much stronger than the auxiliary data assumption. In LOO assumption, the attacker knows the whole training dataset but \textbf{one} target sample and aims to guess the existence of the target sample.

\textbf{Takeaway:} LOO is a strong assumption. The privacy breaches of the synthetic dataset will be negligible under realistic assumptions if the attack success rate is low under the LOO assumption.

\section{Construction of Six-Feature Sine Dataset}

\textbf{Synthetic Sine Datasets}: Similar to the two-attribute Sine dataset, the six-attribute dataset is constructed as follows,
\begin{align}
    \label{eq:sine data 2}
    &F_{1i} = A sin(2\pi f_{1i} \bm t)+ \epsilon, \\
    \label{eq:sine data 3}
    &F_{2i} = A sin(2\pi f_{2i} \bm t)+ \epsilon, \quad  i \in \{1,2,3\},
\end{align}
where $F_{ji}$ denotes the attribute $i$ stored in Party $j$, $A$ can be sampled from $\mathcal{N}(0.4,0.05)$ or $\mathcal{N}(0.6,0.05)$ with equal propability, $\epsilon\in\mathcal{N}(0,0.05)$, and $[f_{11}, f_{12}, f_{13}, f_{21}, f_{22}, f_{23}]=[0.01, 0.005, 0.0075, 0.0125, 0.015, 0.0175]$. In summary, in the two-party scenario, each party stores three attributes, and the amplitudes of the six attributes are the same for a given sample. We can utilize this characteristic to evaluate whether the generative methods learn the correlation between the six attributes stored in different parties. We generate 1,024 samples for both class, i.e., $A\in\mathcal{N}(0.4,0.05)$ and $A\in\mathcal{N}(0.6,0.05)$, and each attribute consists of 800 time steps ($\bm t = [0,1, \cdots, 799]$ in \eqref{eq:sine data 2} and \eqref{eq:sine data 3}).

\bibliographystyle{IEEEtran}
\bibliography{reference}
\end{document}